\documentclass{article}

\usepackage[ruled,vlined,linesnumbered]{algorithm2e}
\usepackage{enumitem}
\usepackage{wrapfig}
\usepackage{float}
\usepackage{subcaption}
\usepackage{amsmath, amssymb, amsthm}
\newtheorem{proposition}{Proposition}
\usepackage{graphicx}
\usepackage{booktabs}
\usepackage{tabularx}
\SetKwInput{KwRequire}{Require}
\SetKwComment{Comment}{$//$~}{}
\usepackage[final]{corl_2026} 

\title{Chunk-wise Model-based Policy Optimization}

\author{
  Weixi Song\textsuperscript{1,2,3}
  \qquad
  Donglin Wang\textsuperscript{3,$\dagger$}
  \qquad
  Yong-Lu Li\textsuperscript{2,4,$\dagger$}
  \\[1ex]
  \textsuperscript{1}Zhejiang University
  \qquad
  \textsuperscript{2}Shanghai Innovation Institute
  \qquad
  \textsuperscript{3}Westlake University
  \qquad
  \\
  \textsuperscript{4}Shanghai Jiao Tong University
  \qquad
  \textsuperscript{$\dagger$}Corresponding Author
}

\begin{document}
\maketitle


\begin{abstract}
    Model-based reinforcement learning has shown superior sample efficiency by leveraging the learned latent dynamics either in the form of derivative-free planning/imagined rollout training or dynamics gradient-based policy updates. While these approaches achieve comparatively fast learning, they all suffer, to varying degrees, from issues of recurrent compounding errors, high inference-time latency, and instability in backward propagation through time (BPTT). In this work, we propose a model-based RL approach that uses the chunk world model as a differentiable training-time evaluator to optimize the chunk actor without recurrent model invocation. Through the collaborative chunk-wise prediction of the world model and policy, we can generate H-step hypothetical rollouts in a single forward pass and directly optimize value expansion, avoiding the costly and unstable BPTT. We evaluate our method on 19 challenging high-dimensional humanoid tasks from HumanoidBench and deploy our method to the real-world for quadruped bipedal walking. \href{https://song-wx.github.io/cmpo.github.io/}{Project Website.}
\end{abstract}

\keywords{Model-based RL, Online RL, Continuous Control} 


\section{Introduction}
Reinforcement learning (RL) has achieved remarkable success in solving complex continuous control problems~\citep{hoeller2024anymal,radosavovic2024real,handa2023dextreme}, games~\citep{ye2021mastering,hafner2020mastering} and large language model post-training~\citep{shao2024deepseekmath,jaech2024openai}. Modern model-free algorithms~\citep{haarnoja2018soft, schulman2017proximal,fujimoto2018addressing,lillicrap2020continuous} provide general-purpose optimization methods and strong asymptotic performance, but they often require a large number of environment interactions. Model-based reinforcement learning (MBRL) offers a promising alternative by learning a surrogate world model that can provide additional information beyond real environment interactions. Modern MBRL methods have shown strong sample efficiency by leveraging learned latent dynamics in different ways. Dyna-style methods leverage the learned model to generate additional transitions as synthetic training data augmentation~\citep{sutton1991dyna,janner2019trust,yu2020mopo,hafner2020mastering,hafner2023mastering,hafner2019dream}, while MPC-style methods~\citep{hansen2022temporal,hansen2024td,okada2020planet,chua2018deep,wang2025bootstrapped} use the learned model for decision-time planning. Another line of MBRL methods~\citep{amos2021model,clavera2020model,amigo2025first}, termed as model-gradient-based methods, exploit the learned dynamics as the differentiable simulator to backpropagate policy gradients through predicted trajectories, providing more informative updates than derivative-free policy search. 

Despite the different optimization forms, these approaches share a common structural dependency: they generate hypothetical future trajectories through recurrent model unrolling, often interleaved with repeated policy queries. This recursive invocation introduces several challenges for efficient policy optimization. First, generating an \(H\)-step hypothetical trajectory requires \(O(H)\) sequential model calls, which increases both training cost and, for planning-based methods, inference-time latency. Second, when policy gradients are propagated through these recurrent rollouts, optimization must rely on backpropagation through time (BPTT), which is costly and unstable over long horizons.

Action chunking, which predicts a sequence of future actions instead of a single-step action, has been widely adopted in imitation learning to improve temporal consistency and reduce compounding action errors~\citep{zhao2023learning,chi2025diffusion,black2024pi_0}. Recently, this idea has also begun to be explored in reinforcement learning. In model-free RL, Q-Chunking~\citep{li2026reinforcement} and Decoupled Q-Chunking~\citep{li2025decoupled} show that operating over action chunks enables unbiased \(n\)-step temporal-difference backups, thereby reducing TD bootstrap error and improving policy learning stability. Beyond model-free settings, action chunks have also been incorporated into offline model-based RL. MAC~\citep{park2025scalable} jointly learns a chunk policy and a long-horizon dynamics model to scale offline model-based policy learning over extended horizons. These developments suggest that action chunking can serve as a structural tool for improving temporal credit assignment and long-horizon policy optimization. However, existing chunk-based RL methods have primarily focused on reducing bootstrap in value learning, while the potential of chunk-wise modeling for model-gradient-based policy optimization remains underexplored. This naturally raises the following question: can we simultaneously apply chunk-wise modeling to both the policy and the world dynamics model to retain the benefits of model-gradient-based optimization while avoiding the pathologies caused by recurrent model calls?

\begin{wrapfigure}{r}{0.40\textwidth}
\begin{minipage}{\linewidth}
    \includegraphics[width=\linewidth]{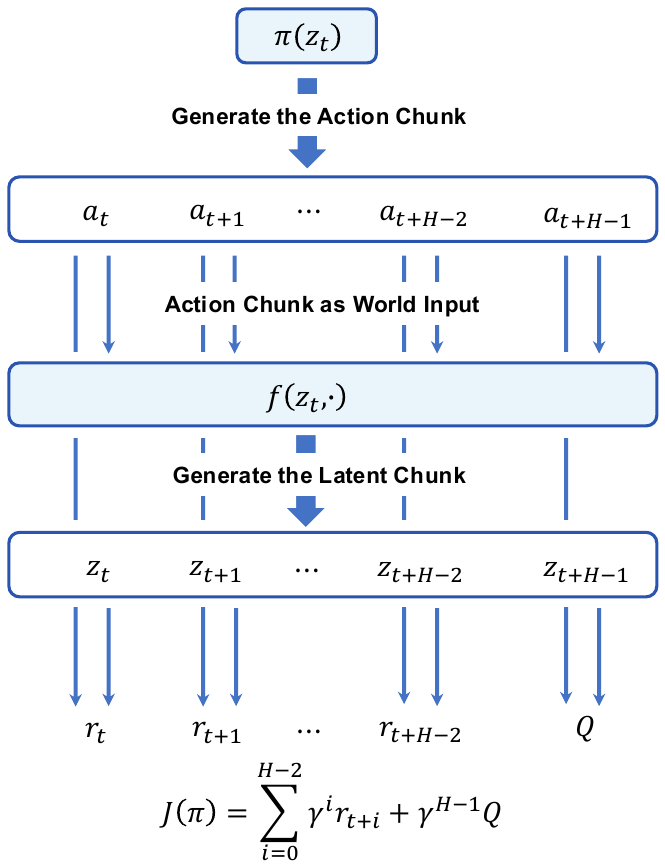}
    \caption{Computation graph of the Collaborative Chunk-wise policy update with the value expansion.}
    \label{fig:compute-graph}
\end{minipage}
\vspace{-1em}
\end{wrapfigure}

To answer this question, we propose \textbf{C}hunk-wise \textbf{M}odel-based \textbf{P}olicy \textbf{O}ptimization(CMPO) that introduces a collaborative chunk-wise modeling design for single-call model-gradient-based policy updates. Instead of learning a conventional one-step latent dynamics model \(\mathbf{z_{t+1}}=f(\mathbf{z_t}, \mathbf{a_t})\) and recurrently applying it over the prediction horizon, we learn a chunk world model that directly maps the current latent state and a chunk of future actions to a sequence of future latent states, i.e., \(\mathbf{z_{t+1:t+H}}=f(\mathbf{z_t}, \mathbf{a_{t:t+H-1}})\). Together with a chunk-wise actor \(\pi_\theta(\mathbf{z_t})=\mathbf{a_{t:t+H-1}}\), our method generates an \(H\)-step hypothetical rollout through a single model invocation, as shown in Figure~\ref{fig:compute-graph}. The chunk world model serves as a differentiable simulator, enabling the actor to directly optimize a multi-step value expansion objective without recurrent model-policy interactions that cause unstable BPTT. At inference time, the actor predicts an action chunk but executes only the first action, preserving closed-loop responsiveness, which outperforms MPC-style methods without test-time planning. We systematically evaluate our framework across 19 challenging, high-dimensional humanoid tasks from HumanoidBench\citep{sferrazza2024humanoidbench}. Furthermore, to demonstrate the transferability of our approach, we apply our method to quadruped bipedal walking in the real world. Our experimental results show that CMPO achieves highly efficient policy improvement while maintaining substantially low training and inference budgets, offering an efficient path forward for generalist robot reinforcement learning. 

Overall, the contributions of our work are summarized as follows:
\begin{itemize}[leftmargin=1em]
\item We propose \textbf{Collaborative Chunk-wise Modeling}, which jointly models chunk-level actions and latent dynamics to generate multi-step hypothetical rollouts through a single model invocation.
\item We introduce \textbf{CMPO}, a model-gradient-based RL framework that avoids recurrent model rollout and long-horizon BPTT, improving both optimization stability and computational efficiency.
\item We demonstrate strong performance and sample efficiency on 19 HumanoidBench tasks, along with successful real-world deployment on quadruped bipedal locomotion.
\end{itemize}

\section{Related Work}
\textbf{MBRL with Imagination as Augmentation.} A classical way of using learned models in MBRL is to generate additional imagined experiences for policy or value learning, which can date back to Dyna~\citep{sutton1991dyna}. Model rollouts are synthesized from real replay states and mixed with real transitions to incorporate the model-free learner~\citep{janner2019trust,kurutach2018model,luo2018algorithmic}. In offline RL, methods such as MOPO~\citep{yu2020mopo}, MOReL~\citep{kidambi2020morel}, and COMBO~\citep{yu2021combo} also use model-generated rollouts to improve policy learning, and additionally introduce uncertainty penalties, pessimistic MDPs, or conservative value regularization to mitigate out-of-distribution model exploitation. These methods treat the model primarily as a data generator and the imagined transitions or trajectories are used as additional training samples. 

\textbf{MBRL with Planning.} Another important line of MBRL uses the learned model for decision-time planning. Early planning-based MBRL methods learn dynamics models and perform trajectory optimization through random shooting, the cross-entropy method (CEM), or related sampling-based optimizers~\citep{nagabandi2018neural, chua2018deep}. More recently, TD-MPC series~\citep{hansen2024td,hansen2022temporal} combines short-horizon latent-space trajectory optimization with a learned terminal value function, allowing local model-based planning while relying on temporal-difference learning for long-term return estimation. Further, \citet{wang2025bootstrapped} bootstraps the policy learning by imitating the MPC expert and \citet{lin2025td} introduces policy regularization to reduce the out-of-distribution queries by the planner and the learned policy prior.

\textbf{MBRL with Model Gradient.}
The third category of MBRL exploits the differentiability of the learned model and leverages model gradient to directly optimize the policy. Stochastic Value Gradient(SVG)~\citep{heess2015learning} provides a general framework for continuous control by backpropagating through stochastic dynamics, value functions, and policies. \citet{amos2021model} combines SVG with Soft Actor Critic~\citep{haarnoja2018soft} to improve the asymptotic performance of model-based methods and Model-Augmented Actor-Critic (MAAC)~\citep{clavera2020model} further derives the monotonic improvement of the pathwise derivative in terms of the gradient error in the model and the value function. Dreamer series~\citep{hafner2019dream,hafner2020mastering,hafner2023mastering} also optimizes the actor by propagating gradients of imagined returns through compact latent trajectories. \citet{amigo2025first} instead performs trajectory unrolling with parallel simulators and propagates gradients through the learned model to mitigate compounding errors.

These categories may overlap in hybrid algorithms and our method primarily falls within the model-gradient category, while remaining orthogonal to test-time planning. In Section~\ref{sec:discussion}, we further explore the potential of integrating our method with MPC.

\section{Chunk-wise Model-based Policy Optimization}
\subsection{Background}
This work focuses on the standard reinforcement learning setting modeled as the infinite-horizon Markov Decision Process (MDP), defined by the tuple $(\mathcal{S}, \mathcal{A}, \mathcal{P}, \mathcal{R}, \gamma)$, where $\mathcal{S}$ denotes the state space, $\mathcal{A}$ denotes the action space, $\mathcal{P}(s_{t+1}|s_t,a_t)$ is the environment transition dynamics, $\mathcal{R}(s_t,a_t)$ is the reward function, and $\gamma \in [0,1)$ is the discount factor. The learning objective is to optimize the actor/policy $\pi_\theta(a_t| s_t)$ that maximizes the expected cumulative discounted rewards $\mathbb{E}_\pi\sum_{t=0}^{\infty}\gamma^tr_t$.

Model-based RL (MBRL) additionally learns a surrogate world model $f_\theta(\mathbf{s_{t+1}}|\mathbf{s_t}, \mathbf{a_t})$ to approximate the environment dynamics and the reward function $R(\mathbf{s_t},\mathbf{a_t})$ to generate immediate reward, which enables agents to simulate future trajectories, termed imagined rollouts, bypassing the need for expensive interactions with a realistic environment. Thus, an $H$-step value expansion with TD truncation can be formulated as:
\begin{equation}
    \hat G_t^{(H)}
    =
    \sum_{k=0}^{H-1} \gamma^k R(\mathbf{z_{t+k}}, \mathbf{a_{t+k}})
    +
    \gamma^H Q(\mathbf{z_{t+H}, \pi(z_{t+H})}).
\end{equation}
which can be utilized for test-time MPC planning~\citep{hansen2024td,hansen2022temporal}, constructing model-augmented TD targets for critic learning~\citep{feinberg2018model}, or policy optimization~\citep{amos2021model}.

\subsection{Motivation}
Typical model-based RL methods~\citep{hafner2019dream,hafner2020mastering,hafner2023mastering,hansen2024td,hansen2022temporal} generate hypothetical rollouts through recurrent interactions between the world model and the actor or planner. Given a latent state \(z_t\), a standard one-step latent dynamics model recursively predicts
$
    \mathbf{z_{t+k+1}} = f(\mathbf{z_{t+k}}, \mathbf{a_{t+k}}),\
    \mathbf{a_{t+k}} = \pi_\theta(\mathbf{z_{t+k}}),\
    \mathbf{r_{t+k}} = R(\mathbf{z_{t+k}, \mathbf{a_{t+k}}}),
$
for \(k=0,\ldots,H-1\). The resulting imagined trajectory $(\mathbf{z_\tau},\mathbf{a_\tau},\mathbf{r_\tau})_{\tau=t}^{t+H}$  is then used either for policy learning, value expansion, or decision-time planning. Although this recurrent rollout design provides useful model-based signals, it also introduces several structural limitations.

First, generating an \(H\)-step hypothetical trajectory requires \(\mathcal{O}(H)\) sequential model invocations. Since \(\mathbf{z_{t+k+1}}\) depends on \(\mathbf{z_{t+k}}\), these calls cannot be fully parallelized over the temporal dimension. This increases the training cost for imagination-based policy learning and further induces inference-time latency for MPC-style methods that repeatedly evaluate candidate action sequences.

Second, recurrent model rollouts suffer from recursive compounding prediction errors. Let \(\mathbf{z_{t+k}}\) denote the ground-truth latent state and \(\mathbf{\hat z_{t+k}}\) the predicted state. If the one-step model has prediction error bounded by \(\epsilon\), and the dynamics model is \(L_f\)-Lipschitz with respect to the latent state, then the rollout error can grow as
\begin{equation}
    \|\mathbf{\hat z_{t+k}} - \mathbf{z_{t+k}}\|
    \leq
    \epsilon \sum_{i=0}^{k-1} L_f^i,
\end{equation}
\begin{proof}
    See Appendix~\ref{ap:proof}.
\end{proof}
which increases with the imagination horizon \(k\), and may grow rapidly when \(L_f>1\). Consequently, the value expansion target \(\hat G_t^{(H)}\) can become biased by errors accumulated over the entire imagined trajectory, especially for long-horizon control.

Third, for gradient-based model-based policy optimization, optimizing the policy through such recurrent hypothetical rollouts requires backward propagation through time (BPTT). The policy gradient involves products of transition Jacobians along the imagined trajectory. Specifically, the first-order gradient of future imagined states to the policy parameters \(\theta\) follows
\begin{equation}
    \frac{d \mathbf{\hat z_{t+k+1}}}{d\theta}
    =
    \frac{\partial f}{\partial z}
    \frac{d \mathbf{\hat z_{t+k}}}{d\theta}
    +
    \frac{\partial f}{\partial a}
    \frac{d \pi_\theta}{dz}
    \frac{d \mathbf{\hat z_{t+k}}}{d\theta}=\Bigg(
    \frac{\partial f}{\partial z}
    +
    \frac{\partial f}{\partial a}
    \frac{d \pi_\theta}{dz}\Bigg)\cdot
    \frac{d \mathbf{\hat z_{t+k}}}{d\theta}.
\end{equation}
Unrolling this recursion yields a chain of Jacobian products across the horizon. Therefore, the gradient magnitude can be dominated by terms of the form
\begin{equation}
\label{eq:instable}
    \prod_{j=i+1}^{k}
    \left(
    \frac{\partial f}{\partial z}
    +
    \frac{\partial f}{\partial a}
    \frac{\partial \pi_\theta}{\partial z}
    \right),
\end{equation}
which may lead to exploding or vanishing gradients as \(k\) increases. In addition, BPTT requires storing the entire recurrent computation graph, further increasing memory and computational costs. We empirically verify this instability in Section~\ref{sec:discussion}, where recurrent model rollouts lead to unstable gradient norms and degraded policy updates.

Action chunking\citep{zhao2023learning}, as a technique widely applied in imitation learning for learning temporally consistent behaviors, has recently begun to be explored in model-free RL as well, where it has also been found to reduce temporal difference bootstrap error by using unbiased n-step backups\citep{li2026reinforcement,li2025decoupled}. Actors on chunked action space have also been applied alongside long-horizon world models for scaling offline model-based RL in \citep{park2025scalable}. These works reveal a broader role of action chunking in reinforcement learning: it can reshape the temporal structure of policy optimization, rather than merely generating consistent low-level controls. Existing chunk-based RL methods mainly exploit this structure from the value-learning perspective. However, the same idea is particularly appealing for model-based gradient policy optimization. The key bottlenecks discussed above all stem from a step-wise rollout mechanism, where the model and the actor must be recurrently invoked to construct an \(H\)-step hypothetical trajectory. 

This observation motivates our central question: can we leverage chunk-wise policy and dynamics modeling to preserve the multi-step training signal of model-based gradient optimization, while eliminating recurrent model invocation, the associated compounding error and BPTT issues?

\subsection{Collaborative Chunk-wise Modeling}
The core motivation is that if both the actor and the world model operate directly at the chunk level, the entire hypothetical trajectory can be generated through a single forward pass. Therefore, we model both the policy and the world model in a chunk-wise manner. Given a current latent state \(\mathbf{z_t}\), the chunked actor predicts a sequence of future actions:
\[
    \mathbf{a_{t:t+H-1}} = \pi(\mathbf{z_t}),
\]
where \(H\) denotes the chunk horizon. Instead of using a one-step latent dynamics model \(\mathbf{z_{t+1}}=f(\mathbf{z_t},\mathbf{a_t)}\) and recursively applying it \(H\) times, we train a chunk world model that directly maps the current latent state and the entire action chunk to a future latent state chunk:
\[
    \mathbf{z_{t+1:t+H}} = f(\mathbf{z_t}, \mathbf{a_{t:t+H-1}}).
\]
This collaborative chunk-wise design changes the role of the learned world model. Rather than serving as a recurrent surrogate environment that must be rolled out step by step, the chunk world model acts as a differentiable training-time evaluator for the chunked actor. Given the predicted latent trajectory \((\mathbf{z_t},\mathbf{a_t},\cdots,\mathbf{a_{t+H-1},\mathbf{z_{t+H}}})\), we can evaluate a \((H-1)\)-step value expansion objective:
\[
    G_t^{(H-1)}
    =
    \sum_{k=0}^{H-2}
    \gamma^k R( \mathbf{z_{t+k}}, \mathbf{a_{t+k}})
    +
    \gamma^{H-1} Q(\mathbf{z_{t+H-1}}, \mathbf{a_{t+H}}),
\]
This formulation reduces the sequential rollout cost from \(\mathcal{O}(H)\) recurrent model calls to a single chunk-level model invocation, avoids recursively compounding error mechanism of imagined latents and value estimation, and for model gradient-based policy optimization, the gradients are propagated through one chunk-level prediction that is significantly shorten by \(H\) times. The overall computational graph of the chunk-wise modeling objective is illustrated in Figure~\ref{fig:compute-graph}.

Based on the above discussion, we introduce the practical design for \textbf{C}hunk-wise \textbf{M}odel-based \textbf{P}olicy \textbf{O}ptimization(\textbf{CMPO}).

\textbf{Components.} We build our method upon TD-MPC2\citep{hansen2024td}, a state-of-the-art model-based architecture design. Therefore, apart from the actor and the world dynamic model, the remaining design stays consistent. Specifically, it consists of the following five components:
\begin{equation}
\begin{aligned}
& \text{Encoder} & \mathbf{z_t} &= h(\mathbf{s_t}) & & \triangleright \text{ Maps states to latent representations} \\
& \text{Latent dynamics} & \mathbf{z_{t+1:t+H}} &= f(\mathbf{ z_t}, \mathbf{ a_{t:t+H-1}}) & & \triangleright \text{ Chunked latent forward dynamics} \\
& \text{Reward} & \mathbf{\hat r_t} &= R(\mathbf{z_t}, \mathbf{a_t}, ) & & \triangleright \text{ Predicts reward } r \text{ of a transition} \\
& \text{Terminal value} & \mathbf{\hat q_t} &= Q(\mathbf{z_t}, \mathbf{a_t}) & & \triangleright \text{ Predicts discounted sum of rewards } \\
& \text{Policy prior} & \mathbf{a_{t:t+H-1}} &= \pi(\mathbf{z_t}) & & \triangleright \text{ Predicts chunked action }
\end{aligned}
\end{equation}

\textbf{Model Learning.} Following the basic framework of the prior work, the training objective of the state encoder $h$, the chunked world dynamics model $f$, and the reward model $R$ is summarized as follows:
\begin{equation}
\label{eq:model_obj}
\begin{gathered}
    \mathcal{L}_{\text{model}} = \underset{(\mathbf{s_t},\mathbf{a_t},\mathbf{r_t}, \mathbf{s_{t+1}})_{0}^{H-1}\sim\mathcal{B}}{\mathbb{E}}\Bigg[\sum_{t=0}^{H} \lambda^t\bigg(||\mathbf{z_t}- \text{sg}(h(\mathbf{s_{t}}))||_2^2 + \text{CE}(\mathbf{\hat{r}_{t}}, \mathbf{r_{t+1}})\bigg) \Bigg] \\
    \mathbf{z_{1:H}} = f(\mathbf{z_0},\mathbf{a_{0:H-1}})
\end{gathered}
\end{equation}
where $\text{sg}$ is the stop-gradient operation, $\lambda$ is a hyper-parameter that weights the near-prediction terms higher, $\text{CE}$ is the cross-entropy loss of the two-hot distribution, and $\mathbf{z_{1:H}}$ is the chunked rollout by the chunked world dynamics $f$.

\textbf{Value Learning.} Similar to \citep{li2025decoupled}, although we adopt a chunked actor as the base policy, during inference we only execute the first step of the action chunk to ensure close-loop responsiveness to the environment. Therefore, different from the multi-step action Q function used in Q-Chunking\citep{li2026reinforcement}, we employ a single-step action Q functions as the critic and update the critic with the classic 1-step TD error:
\begin{equation}
\label{eq:value_obj}
\begin{gathered}
    \mathcal{L}_{\text{critics}} = \underset{(\mathbf{s_t},\mathbf{a_t},\mathbf{r_t}, \mathbf{s_{t+1}})_{0}^{H-1}\sim\mathcal{B}}{\mathbb{E}}\Bigg[\sum_{t=0}^{H} \lambda^t\text{CE}(\hat{q}_{t}, q_t) \Bigg] \\
    q_t = \text{sg}(\mathbf{r_{t+1}}+Q'(\mathbf{z_{t+1}},\pi_\theta(\mathbf{z_{t+1}}))),\ \mathbf{z_{t+1}}=h(\mathbf{s_{t+1}})
\end{gathered}
\end{equation}
where \(q_t\) is the TD target and \(Q'\) is the target Q function updated via exponential moving average.

\textbf{Policy Learning.}
We use model-based value expansion~\citep{feinberg2018model} as our training-time policy optimization objective, which is closely related to the objective in Stochastic Value Gradient methods~\citep{amos2021model} and Model-Augmented Actor-Critic~\citep{clavera2020model}. Since both the actor and the world model operate at the chunk level, the actor can optimize the single chunk-level rollout with ease, avoiding recurrent model unrolling and long-horizon BPTT. Instead of using a full \(H\)-step value expansion objective, which requires querying the actor again at the terminal imagined latent \(\mathbf{z_{t+H}}\), we adopt an \((H-1)\)-step surrogate. With SAC-style entropy regularization~\citep{haarnoja2018soft}, the actor objective is:
\begin{equation}
\label{eq:policy_obj}
\begin{gathered}
    \mathcal{L}_{\text{policy}} = \underset{\substack{\mathbf{s} \sim \mathcal{B} \\ \mathbf{a_{0:H-1}} \sim \pi}}{\mathbb{E}}\left[ -\alpha(\sum_{i=0}^{H-2}\gamma^iR(\mathbf{z_{i}}, \mathbf{a_{i}}) +\gamma^{H-1}Q(\mathbf{z_{H-1}}, \mathbf{a_{H-1}})) - \beta\mathcal{H}(\pi(\cdot|\mathbf{s}))\right]\\
    \mathbf{z_{1:H}} = f(\mathbf{z_0},\mathbf{a_{0:H-1}})
\end{gathered}
\end{equation}
where $\mathcal{H}$ is the entropy of the action distribution, the chunked rollout $\mathbf{z_{1:H}}$ is inferred by the one-step invocation of the chunked world dynamics $f$ and \(\alpha,\beta\) are the hyper-parameters that control the degree of entropy regularization. We provide the pseudocode for CMPO in Appendix~\ref{ap:alg} and the detailed architecture design in Appendix~\ref{ap:net}.

\section{Experiments} 
\begin{figure}[t]
    \centering
    \includegraphics[width=\linewidth]{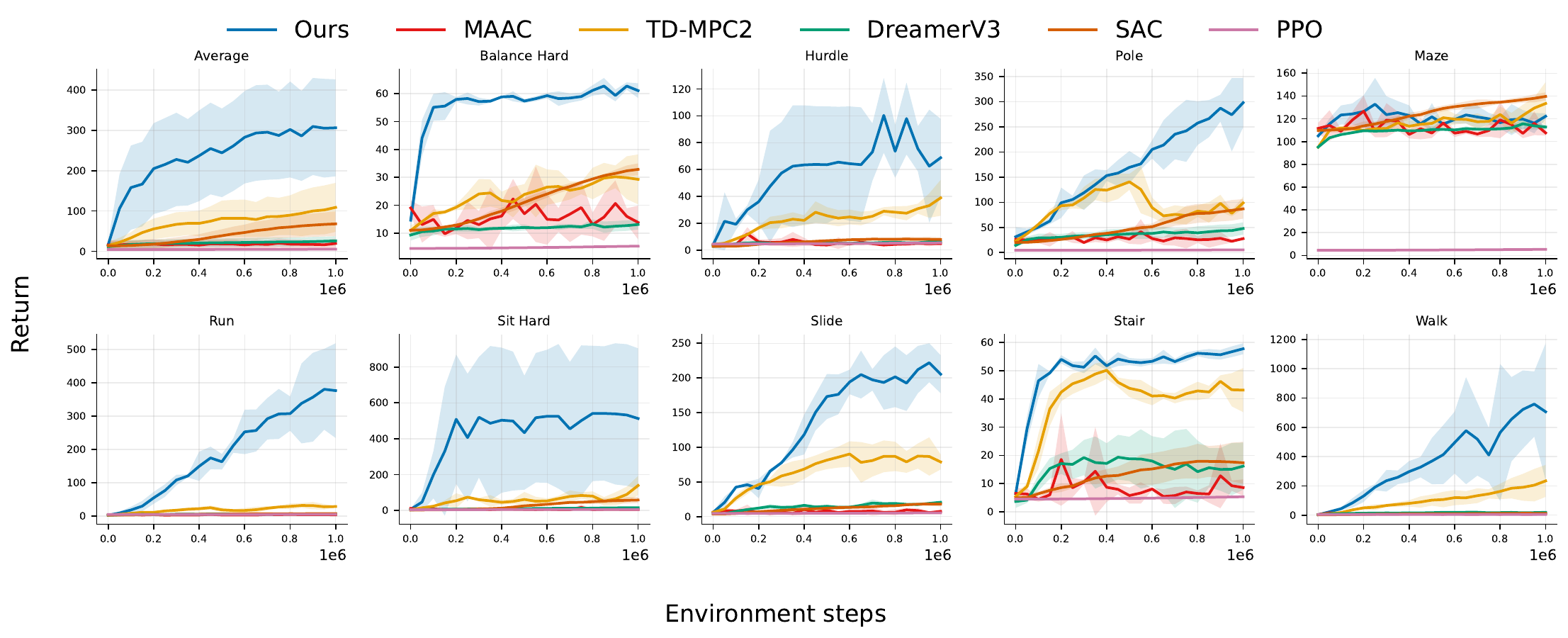}
    \caption{\textbf{Results in HumanoidBench (9 tasks shown). }Mean of 3 runs with different seeds. Shaded areas are 95\% confidence intervals. CMPO achieves the best average performance and sample efficiency across baselines. Refer to Appendix~\ref{ap:all_results} for additional results.}
    \label{fig:main-results}
    \vspace{-1em}
\end{figure}
The goal of our empirical evaluation is to assess whether collaborative chunk-wise modeling can benefit efficient model-gradient-based policy optimization. In particular, we investigate whether CMPO can achieve competitive performance and sample efficiency compared with strong model-free and model-based RL baselines, while also evaluating its applicability to real-world robotic systems. Finally, we further analyze several key properties of CMPO, including optimization stability during training, the effects of different horizon settings, and its integration with test-time planning.
\subsection{High-dimensional Humanoid Continuous Control}
We evaluate our method on 19 selected challenging high-dimensional ($\mathcal{A}\in\mathbb{R}^{61}$) continuous control tasks from HumanoidBench~\citep{sferrazza2024humanoidbench}. These tasks require learning coordinated whole-body behaviors under large state and action spaces, which are suitable for evaluating sample efficiency, training stability, and long-horizon policy optimization of RL algorithms. 

\textbf{Baselines.}
We compare our method against both model-free and model-based RL algorithms. For model-free baselines, we include PPO~\citep{schulman2017proximal} and SAC~\citep{haarnoja2018soft}, two widely used continuous-control algorithms with strong empirical performance. For model-based baselines, we compare with TD-MPC2~\citep{hansen2024td}, a strong MPC-style model-based RL method that combines latent dynamics learning with decision-time trajectory optimization, and DreamerV3~\citep{hafner2023mastering}, a representative imagination-based method that trains actors and critics through recurrent latent rollouts and backpropagate the expected returns through the model. We additionally include MAAC~\citep{clavera2020model} as a gradient-based model-based RL baseline. 
This comparison allows us to evaluate whether our chunk-wise formulation can retain the benefits of model-based methods while avoiding the instable optimization of BPPT. We provide the more experimental setting details in the Appendix~\ref{ap:exp}.

\textbf{Results.} We present the averaged results across all 19 selected tasks in Figure~\ref{fig:main-results} and demonstrate a subset of the results. The complete results are provided in Appendix~\ref{ap:all_results}. The results show that CMPO achieves substantially better average performance and surpasses the baselines in the majority of the evaluated tasks. Additionally, CMPO demonstrates a significant advantage in sample efficiency. Furthermore, we compare the training wall-time of TD-MPC2 and CMPO. We benchmark the training wall-time of both methods on a single NVIDIA RTX 4090 under identical settings and without employing optimization techniques such as Torch Compile. The results is as shown in Table~\ref{tab:wall-time}. Benefiting from one-step hypothetical rollout generation enabled by chunk-wise modeling, as well as eliminating iterative zeroth-order optimization during inference, CMPO requires only around 50\% of the training time of TD-MPC2 under the same number of training steps while achieving approximately a 30$\times$ speedup in inference, further demonstrating its practical efficiency.

\begin{wraptable}{r}{0.5\textwidth}
\vspace{-1em}
\centering
\small
\setlength{\tabcolsep}{3pt}
\resizebox{\linewidth}{!}{
\begin{tabular}{ccccc}
\toprule
& \multicolumn{2}{c}{Walk} & \multicolumn{2}{c}{Run} \\
\cmidrule(lr){2-3} \cmidrule(lr){4-5}
Wall-time & TD-MPC2 & CMPO & TD-MPC2 & CMPO \\
\midrule
200k training steps (h) $\downarrow$ &3.57 &\textbf{1.74} &3.54 &\textbf{1.72}\\
one inference step (ms) $\downarrow$& 40.8& \textbf{1.4}& 39.5&\textbf{1.3}\\
\bottomrule
\end{tabular}
}
\caption{\textbf{Training and inference Wall-time.}}
\label{tab:wall-time}
\vspace{-1em}
\end{wraptable}

\subsection{Quadruped Locomotion \& Sim-to-Real Deployment}
\begin{figure}[h]
    \centering
    \includegraphics[width=0.95\linewidth]{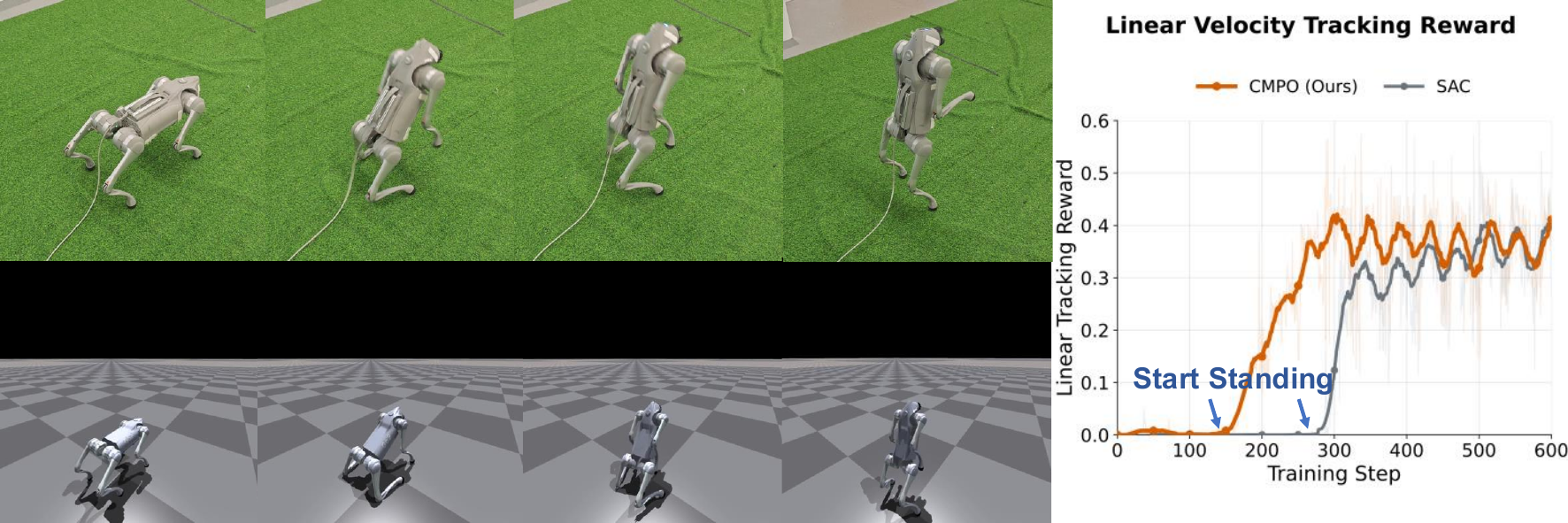}
    \caption{\textbf{Sim-to-real transfer of standing locomotion on Unitree Go2.}: \textbf{Left:} Representative snapshots of the learned standing locomotion policy in simulation and on the real robot. \textbf{Right:} Training curve of the standing linear-tracking reward in simulation. 
}
    \label{fig:sim-2-real}
\end{figure}
One of the key advantages of CMPO is its ability to achieve strong performance with a direct-output inference pipeline which outperforms methods that rely on MPC-based test-time planning. This significantly reduces inference latency as shown in Table~\ref{tab:wall-time}, enabling real-time responses to the dynamic environment, which is both beneficial to training efficiency and the real-world robotic deployment. Therefore, we further conduct sim-to-real deployment experiments on the Unitree Go2 quadruped robot platform. We evaluate SAC and CMPO under the same training and deployment pipeline. Figure~\ref{fig:sim-2-real} shows the sim-2-real evaluation of standing locomotion on Unitree Go2 and the standing linear velocity tracking results of SAC and CMPO. The reward is defined as the linear velocity tracking objective gated by the robot base height, such that the tracking reward is activated only when the base height exceeds a predefined threshold. In simulation, CMPO discovers standing substantially earlier and achieves higher tracking reward. On hardware, maintaining the bipedal posture for 20s succeeds in \textbf{7/10} trials for CMPO versus \textbf{4/10} for SAC. We use SAC because TD-MPC2’s iterative test-time planning is not directly compatible with the same low-latency TorchScript feed-forward deployment interface. Full reward are provided in the Appendix ~\ref{ap:real}.

\subsection{Discussion}
\label{sec:discussion}

\begin{figure}[h]
    \centering
    \begin{subfigure}{0.5\textwidth}
        \centering
        \includegraphics[width=\linewidth]{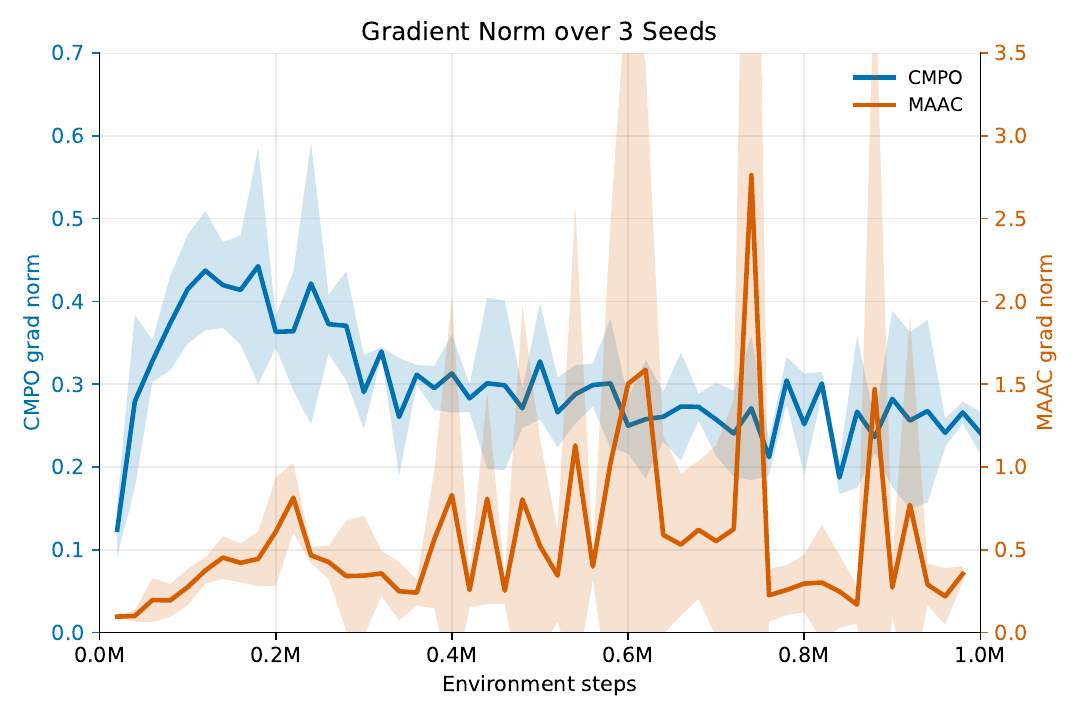}
        \caption{\textbf{Gradient instability in MAAC.}}
        \label{fig:instable}
    \end{subfigure}
    \hfill
    \begin{subfigure}{0.35\textwidth}
        \centering
        \includegraphics[width=\linewidth]{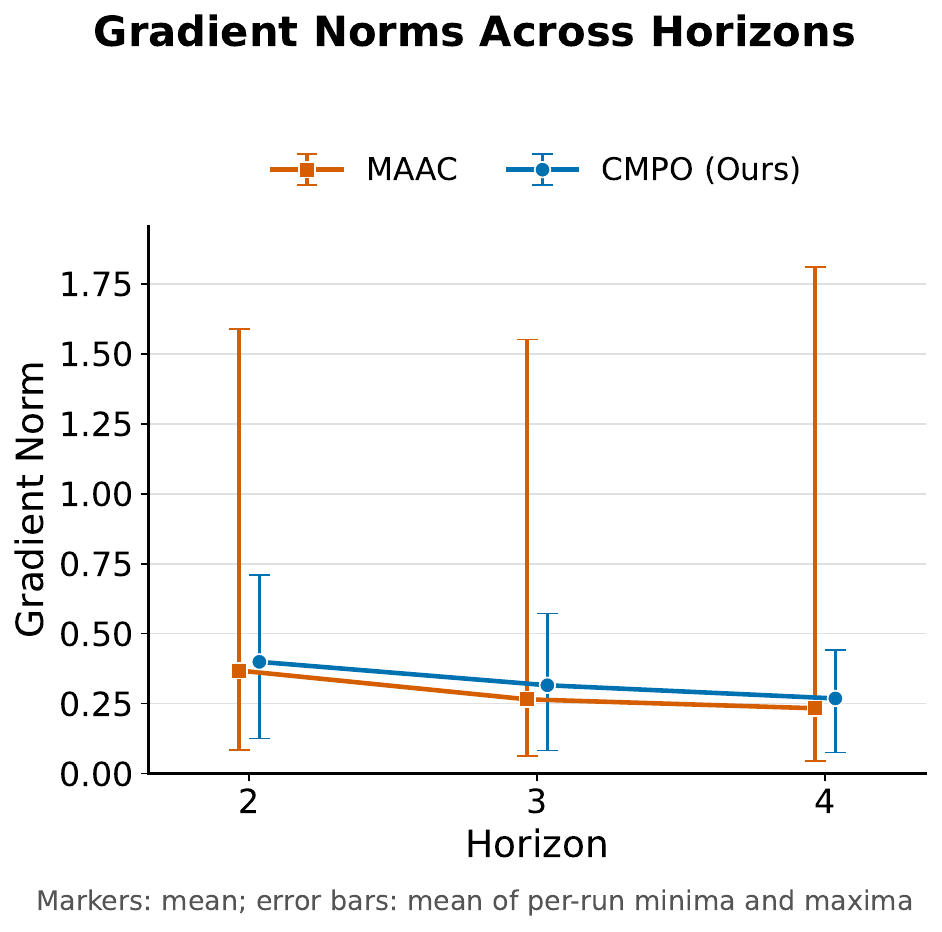}
        \caption{\textbf{Gradient Norm across Horizons.}}
        \label{fig:grad_norm_vers_horizon}
    \end{subfigure}
\end{figure}

\textbf{Gradient instability in MAAC.} One of our primary motivations is to employ chunk-wise modeling for both the policy and the world model to mitigate the optimization instability introduced by BPTT over long horizons. To validate this, we report the gradient norms during training for the classical single-step modeling approach MAAC and compare them with those of CMPO. As shown in Figure~\ref{fig:instable}, we find that MAAC exhibits highly oscillatory gradient norms, consistent with the characteristics of compositional gradient operators~\ref{eq:instable}, while CMPO, benefiting from chunk-wise modeling, demonstrates substantially more stable optimization dynamics. We also highlight BPTT pathology is caused by recurrence and can already manifest at moderate horizons. As shown in Fig.~\ref{fig:grad_norm_vers_horizon}, MAAC exhibits larger gradient excursions, whereas CMPO maintains a narrower and more stable range with horizon increasing. 

\begin{figure}[h]
    \centering
    \begin{subfigure}{0.5\textwidth}
        \centering
        \includegraphics[width=\linewidth]{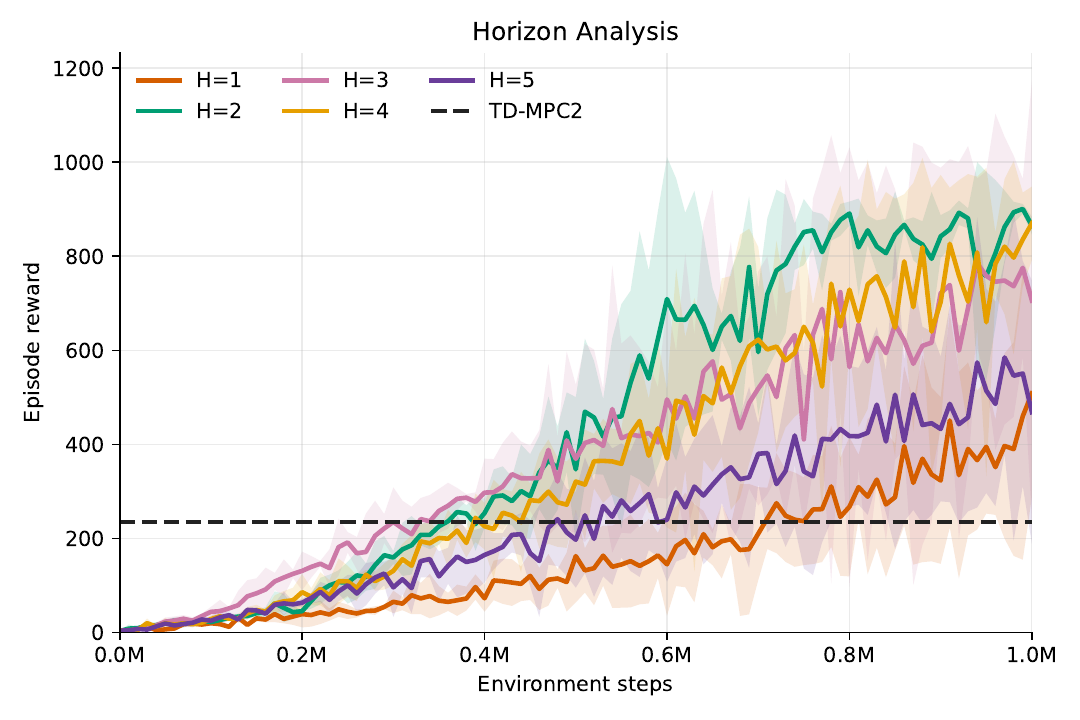}
        \caption{\textbf{CMPO across Horizons.}}
        \label{fig:horizon}
    \end{subfigure}
    \hfill
    \begin{subfigure}{0.35\textwidth}
        \centering
        \includegraphics[width=\linewidth]{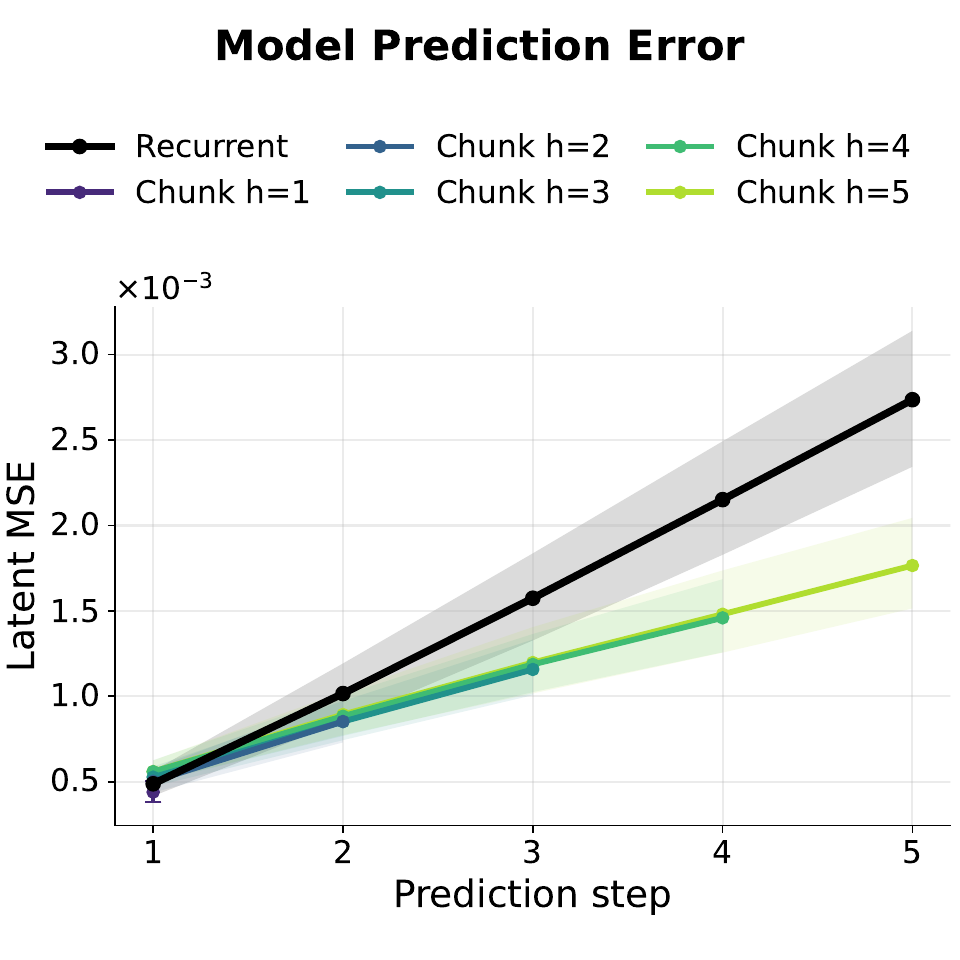}
        \caption{\textbf{Model Prediction Error.}}
        \label{fig:horizon-err}
    \end{subfigure}
\end{figure}
\textbf{CMPO with diverse horizon.} Collaborative chunk-wise modeling avoids recursive model prediction error propagation. Therefore, as shown in Fig.~\ref{fig:horizon-err}, the recurrent rollout error of single step model grows substantially faster with prediction step increasing, while chunk models consistently achieve lower latent prediction error. Also, the chunk-model error increases with horizon, explaining the degradation at larger horizons shown in Figure~\ref{fig:horizon}. We further investigate the performance of CMPO under different horizon settings. The results in Figure~\ref{fig:horizon} indicate that chunk-wise modeling with an appropriate horizon can substantially improve performance, with the best performance achieved at horizon = 2 and 4. In the main experiments, for a fair comparison with TD-MPC2, we consistently use horizon = 3 across all settings, which further demonstrates the strong performance of CMPO even without optimal horizon tuning. We observe that performance degrades when the horizon increases to 5, suggesting that prediction errors remain a limiting factor under longer-horizon rollouts. Overall, CMPO removes the recurrent temporal depth of model rollout/BPTT, but does not eliminate the intrinsic difficulty of direct long-horizon prediction, explaining why a moderate horzion is optimal.

\begin{wraptable}{r}{0.4\textwidth}
\centering
\small
\setlength{\tabcolsep}{3pt}
\resizebox{\linewidth}{!}{
\begin{tabular}{l|c|c|c|c}
\toprule
& Walk & Run & Stair & Slide\\
\midrule
CMPO & \textbf{954.1}& 629.2 & 60.9 & 265.8\\
\midrule
w MPC & 931.1 & \textbf{864.5} & \textbf{71.9} & \textbf{391.2}\\
w/o CM&22.2 &9.6 &37.1 &36.1\\
\bottomrule
\end{tabular}
}
\caption{Best average performance of variants with model prediction control (w MPC) and without the chunk-wise model (w/o CM).} 
\label{tab:w-mpc}
\vspace{-1em}
\end{wraptable}

\textbf{CMPO without the chunk-wise model.} Our key design lies in the collaborative chunk-wise modeling of both the policy and world dynamics model to enable one-step hypothetical rollout generation. A natural question is whether chunk-wise modeling remains effective when applied to only one component. We note that combining a chunk-wise model with a single-step policy is ill-posed, as future latent-state chunks cannot be properly grounded by single-step actions. Therefore, in Table~\ref{tab:w-mpc}, we only evaluate the setting with a chunk-wise policy and a single-step model (w/o CM). The results indicate that while the chunk-wise policy eliminates repeated policy calls, the recursive invocations required by the single-step model still lead to significant performance degradation, further highlighting the negative effects of pathological BPTT dynamics in model-gradient-based methods.

\textbf{CMPO with test-time model-based planning.}
To highlight the effectiveness of chunk-wise modeling itself, we do not adopt MPC-based test-time planning in the main experiments. Since chunk-wise modeling and MPC are orthogonal design choices, we further combine CMPO with the same MPPI planner used in TD-MPC2 and report the results in Table~\ref{tab:w-mpc}. We observe that MPC improves performance on unsaturated tasks, likely due to the additional exploration induced by MPPI's sampling-based zeroth-order optimization. Meanwhile, on simpler tasks, the gradient information provided by chunk-wise modeling alone is sufficient to reach saturated performance.
\section{Conclusion}
We propose a collaborative chunk-wise modeling framework for Model-Based Reinforcement Learning, in which both the actor and the world dynamics model generate chunk-level foresight predictions. This chunk-wise design enables the generation of hypothetical future rollouts without recurrent model invocation, thereby reducing the computational overhead introduced by recursive calls. Moreover, in model-gradient-based training, it effectively stabilizes the optimization under long horizons by mitigating gradient oscillations and avoiding the pathological BPTT dynamics associated with single-step models. The simulation results on high-dimensional humanoid continuous control tasks demonstrate that our method, relying solely on model-gradient-based optimization, surpasses both model-based methods that employ test-time planning and strong model-free baselines. Finally, the real-world demonstrations of bipedal walking on the Unitree Go2 quadruped robot further validate its practical efficiency. 
\section*{Limitation}
In this work, we primarily focus on building a stable model-gradient-based method on top of TD-MPC2, and most of the model architecture is inherited from the design of TD-MPC2. Exploring network architectures that are more suitable for chunk-wise modeling in reinforcement learning remains an open question. In addition, to ensure closed-loop responsiveness in dynamic environments, our chunk policy executes only the first action of each predicted chunk. However, for less dynamic environments, this design implies that the remaining actions within the chunk are discarded and therefore underutilized. While some prior work~\citep{liang2026adaptive,wen2026adaptive,zhang2026chain} in imitation learning has explored dynamic action horizons, this direction remains relatively underexplored in reinforcement learning. Therefore, an important future direction is how to effectively integrate our collaborative chunk-wise modeling framework with dynamic horizon execution.


\clearpage
\acknowledgments{This work was supported in part by Shanghai Municipal Science and Technology Major Project (2025SHZDZX025G14), National Natural Science Foundation of China (U25A20442, 62306175).}


\bibliography{example}  
\appendix

\section{Proof}
\label{ap:proof}
\begin{proposition}[Compounding error of recurrent model rollouts]
\label{thm:compounding_error}
Consider a deterministic latent dynamics function \(f:\mathcal{Z}\times\mathcal{A}\rightarrow\mathcal{Z}\) and a learned dynamics model \(\hat f_\phi:\mathcal{Z}\times\mathcal{A}\rightarrow\mathcal{Z}\). 
Given the same initial latent state \(z_t=\hat z_t\) and the same open-loop action sequence \(a_{t:t+H-1}\), define the true and imagined rollouts as
\(
    z_{t+k+1} = f(z_{t+k}, a_{t+k}),\
    \hat z_{t+k+1} = \hat f_\phi(\hat z_{t+k}, a_{t+k}),\
\)
for \(k=0,\ldots,H-1\). 
Assume that the learned model has uniformly bounded one-step prediction error
\[
    \|\hat f_\phi(z,a)-f(z,a)\| \leq \epsilon,
    \qquad \forall (z,a) \in \mathcal{D},
\]
where \(\mathcal{D}\) contains the states and actions visited by the rollouts. 
Assume further that the true dynamics \(f\) is \(L_f\)-Lipschitz with respect to the latent state, i.e.,
\[
    \|f(z,a)-f(z',a)\| \leq L_f \|z-z'\|,
    \qquad \forall z,z'\in\mathcal{Z}, \ \forall a\in\mathcal{A}.
\]
Then, for any rollout step \(k\geq 1\), the imagined rollout error satisfies
\[
    \|\hat z_{t+k}-z_{t+k}\|
    \leq
    \epsilon \sum_{i=0}^{k-1} L_f^i.
\]
\end{proposition}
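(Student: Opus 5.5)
The plan is to prove the bound by induction on the rollout step $k$, tracking the error $e_k := \|\hat z_{t+k}-z_{t+k}\|$. Because $\hat z_t = z_t$, we have $e_0=0$. The key is a one-step recursion of the form $e_{k+1}\le L_f e_k + \epsilon$. Unrolling it from $e_0=0$ gives $e_k \le \epsilon\sum_{i=0}^{k-1}L_f^i$.

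To get the recursion, fix $k\ge 0$ and insert the intermediate quantity $f(\hat z_{t+k},a_{t+k})$, which is the true dynamics evaluated at the imagined state. The triangle inequality gives
\[
e_{k+1} = \|\hat f_\phi(\hat z_{t+k},a_{t+k}) - f(z_{t+k},a_{t+k})\| \le \|\hat f_\phi(\hat z_{t+k},a_{t+k}) - f(\hat z_{t+k},a_{t+k})\| + \|f(\hat z_{t+k},a_{t+k}) - f(z_{t+k},a_{t+k})\|.
\]
The first term is a one-step model error and is at most $\epsilon$, provided $(\hat z_{t+k},a_{t+k})\in\mathcal{D}$. The second term compares $f$ at two states under the same action, so the Lipschitz assumption bounds it by $L_f e_k$.

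For the induction, the base case $k=1$ reads $e_1\le \epsilon + L_f\cdot 0=\epsilon$, which matches the claimed sum $\epsilon\sum_{i=0}^{0}L_f^i=\epsilon$. For the inductive step, assume $e_k\le\epsilon\sum_{i=0}^{k-1}L_f^i$. Then
\[
e_{k+1}\le L_f\,\epsilon\sum_{i=0}^{k-1}L_f^i+\epsilon=\epsilon\sum_{i=0}^{k}L_f^i,
\]
which closes the induction for all $k\ge1$, up to $k=H$.

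The only delicate point is the choice of intermediate term. The error bound is applied at the imagined point $(\hat z_{t+k},a_{t+k})$, and the Lipschitz property is applied to the true $f$. This is why the proposition assumes Lipschitzness of $f$ and not of $\hat f_\phi$, and why it needs $\mathcal{D}$ to contain the state–action pairs visited by the imagined rollout. I will state this membership explicitly when invoking the error bound. Everything else is routine algebra.
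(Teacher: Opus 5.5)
Your proof is correct and matches the paper's argument step for step: add and subtract $f(\hat z_{t+k},a_{t+k})$, apply the triangle inequality, bound the two terms by $\epsilon$ and $L_f e_k$, and unroll $e_{k+1}\le \epsilon + L_f e_k$ from $e_0=0$. Your explicit induction, which uses $L_f\ge 0$, and your remark that the error bound is applied at the imagined pair $(\hat z_{t+k},a_{t+k})\in\mathcal{D}$ make precise what the paper's ``unrolling'' step leaves implicit.
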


\begin{proof}
Let
\[
    e_k = \|\hat z_{t+k}-z_{t+k}\|.
\]
Since both rollouts start from the same initial latent state, we have \(e_0=0\). For the next-step error,
\[
\begin{aligned}
e_{k+1}
&= \|\hat z_{t+k+1}-z_{t+k+1}\| \\
&= \|\hat f_\phi(\hat z_{t+k},a_{t+k}) - f(z_{t+k},a_{t+k})\| .
\end{aligned}
\]
By adding and subtracting \(f(\hat z_{t+k},a_{t+k})\), we obtain
\[
\begin{aligned}
e_{k+1}
&= \|\hat f_\phi(\hat z_{t+k},a_{t+k})
      - f(\hat z_{t+k},a_{t+k}) \\
&\qquad + f(\hat z_{t+k},a_{t+k})
      - f(z_{t+k},a_{t+k})\| \\
&\leq
\|\hat f_\phi(\hat z_{t+k},a_{t+k})
      - f(\hat z_{t+k},a_{t+k})\| \\
&\qquad +
\|f(\hat z_{t+k},a_{t+k})
      - f(z_{t+k},a_{t+k})\| .
\end{aligned}
\]
Using the bounded one-step model error and the Lipschitz continuity of \(f\), we have
\[
    e_{k+1} \leq \epsilon + L_f e_k .
\]
Unrolling this recursion gives
\[
\begin{aligned}
e_k 
&\leq \epsilon + L_f \epsilon + L_f^2 \epsilon + \cdots + L_f^{k-1}\epsilon \\
&= \epsilon \sum_{i=0}^{k-1} L_f^i .
\end{aligned}
\]
Equivalently, when \(L_f\neq 1\),
\(
    \|\hat z_{t+k}-z_{t+k}\|
    \leq
    \epsilon \frac{L_f^k-1}{L_f-1},
\)
and when \(L_f=1\),
\(
    \|\hat z_{t+k}-z_{t+k}\|
    \leq
    k\epsilon.
\)
This completes the proof.
\end{proof}

\section{Implementation Details}
\subsection{Pseudocode for CMPO}
\label{ap:alg}
\begin{algorithm}[H]
\caption{Chunk-wise Model-based Policy Optimization (CMPO)}
\label{alg:wcg_training}

\KwRequire{
Initialize chunk policy $\pi$, chunk world model $f$, critics $Q_\psi$, targte critics $Q_{\psi'}$,state encoder $h$ and reward model $R$ randomly.
}

\While{not converged}{
    \Comment{Collect experiences}
    \For{step $t = 0 \ldots T$}{
        Generate chunked action distribution $\pi_{t:t+C} \leftarrow \pi(\cdot \mid h(\mathbf{s}_t), p_\theta)$\;
        Sample chunked action sequence $\mathbf{a}_{t:t+C} \sim \pi_{t:t+C}$\;
        Execute the first action $(\mathbf{s}_{t+1}, r_t) \leftarrow \mathrm{env.step}(\mathbf{a}_t)$\;
        Store the trainsition $\mathcal{B} \leftarrow \mathcal{B} \cup
        (\mathbf{s}_t, \mathbf{a}_t, r_t, \mathbf{s}_{t+1})$\;
    }
    \Comment{Update networks}
    \For{num updates per episode}{
        Sample chunked batch $\{\mathbf{s}_t, \mathbf{a}_t, r_t, \mathbf{s}_{t+1}, \pi_t\}_{t:t+H}
        \sim \mathcal{B}$\;

        Update $h, f, R$ via Equation \ref{eq:model_obj}\;
        Update $Q_\psi$ via Equation \ref{eq:value_obj}\;
        Update $\pi_\theta$ via Equation \ref{eq:policy_obj}\;
        Update $Q_{\phi^-}$ via EMA.\\
    }
}
\end{algorithm}
\subsection{Network Settings}
\label{ap:net}
We retain most of the network architecture and hyperparameter settings from TD-MPC2 to ensure that the performance gains of CMPO arise solely from our collaborative chunk-wise modeling. Our chunk-wise policy and dynamics model are built upon the MLP-based single-step policy/model architecture of TD-MPC2. Concretely, we extend the output size of the single-step policy from the original $action\ dim$ to $action\ dim \times horizon$. For the chunk-wise dynamics model, the input dimension is expanded from $latent\ dim+action\ dim$ to $latent\ dim+horizon\times action\ dim$, and the output dimension is correspondingly extended from $latent\ dim$ to $latent\ dim\times horizon$. Such a design introduces only minimal modifications to the original TD-MPC2 architecture, ensuring that the observed performance gains primarily stem from collaborative chunk-wise modeling rather than architectural differences.

\section{Experimental Details}
\label{ap:exp}
\subsection{Simulation Benchmark Overview}
We evaluate our method in 19 selected high-dimensional($\mathcal{A}\in\mathbb{R}^{61}$) whole-body tasks with a Unitree H1hand robot which are originally from HumanoidBench~\citep{sferrazza2024humanoidbench}. These 19 selected tasks cover a diverse set of high-dimensional humanoid control challenges, including basic locomotion such as \emph{walking}, \emph{running}, and \emph{standing}; terrain traversal tasks such as \emph{stairs}, \emph{slides}, \emph{hurdles}, \emph{mazes}, and \emph{poles}; and posture control tasks like \emph{balancing}; as well as object- and scene-interaction tasks such as \emph{door opening}, \emph{cube manipulation}, \emph{bookshelf interaction}, \emph{basketball}, \emph{window manipulation}, and \emph{spoon handling}. Together, these tasks evaluate both whole-body locomotion and fine-grained manipulation capabilities, providing a comprehensive benchmark for assessing the robustness and generality of our method.Detailed reward specifications can be found in the original HumanoidBench paper. We additionally provide visualizations for the selected tasks.

\begin{figure}[h]
\centering

\begin{subfigure}{0.18\textwidth}
\includegraphics[width=\linewidth]{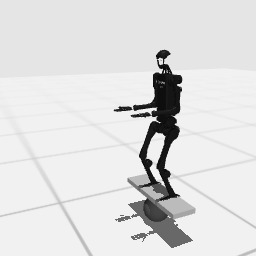}
\caption{Balance}
\end{subfigure}
\begin{subfigure}{0.18\textwidth}
\includegraphics[width=\linewidth]{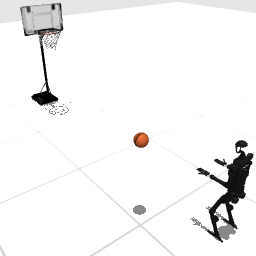}
\caption{Basketball}
\end{subfigure}
\begin{subfigure}{0.18\textwidth}
\includegraphics[width=\linewidth]{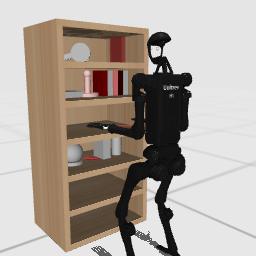}
\caption{Bookshelf}
\end{subfigure}
\begin{subfigure}{0.18\textwidth}
\includegraphics[width=\linewidth]{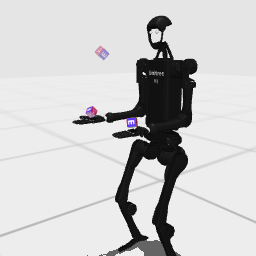}
\caption{Cube}
\end{subfigure}
\begin{subfigure}{0.18\textwidth}
\includegraphics[width=\linewidth]{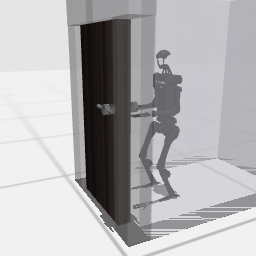}
\caption{Door}
\end{subfigure}
\begin{subfigure}{0.18\textwidth}
\includegraphics[width=\linewidth]{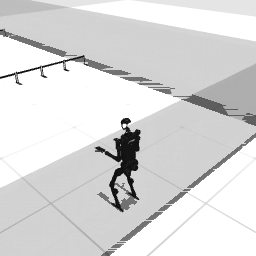}
\caption{Hurdle}
\end{subfigure}
\begin{subfigure}{0.18\textwidth}
\includegraphics[width=\linewidth]{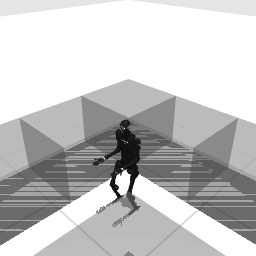}
\caption{Maze}
\end{subfigure}
\begin{subfigure}{0.18\textwidth}
\includegraphics[width=\linewidth]{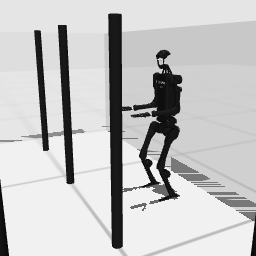}
\caption{Pole}
\end{subfigure}
\begin{subfigure}{0.18\textwidth}
\includegraphics[width=\linewidth]{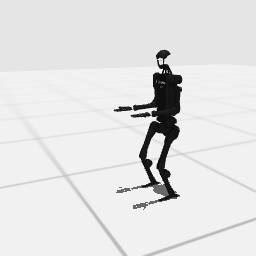}
\caption{Run/Stand/Walk}
\end{subfigure}
\begin{subfigure}{0.18\textwidth}
\includegraphics[width=\linewidth]{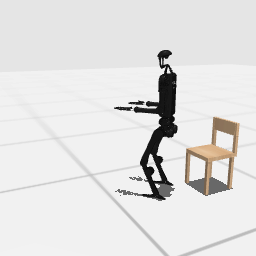}
\caption{Sit }
\end{subfigure}
\begin{subfigure}{0.18\textwidth}
\includegraphics[width=\linewidth]{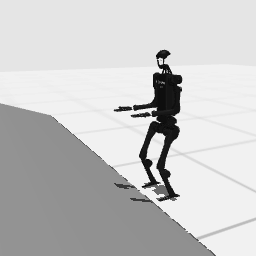}
\caption{Slide}
\end{subfigure}
\begin{subfigure}{0.18\textwidth}
\includegraphics[width=\linewidth]{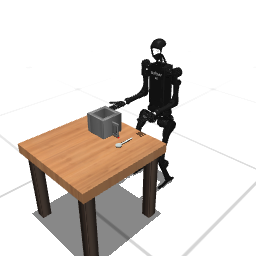}
\caption{Spoon}
\end{subfigure}
\begin{subfigure}{0.18\textwidth}
\includegraphics[width=\linewidth]{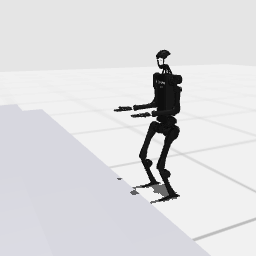}
\caption{Stair}
\end{subfigure}
\begin{subfigure}{0.18\textwidth}
\includegraphics[width=\linewidth]{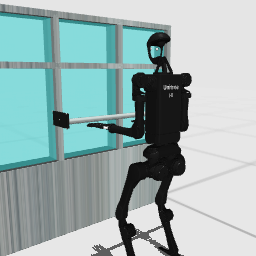}
\caption{Window}
\end{subfigure}
\caption{Visualization of the selected tasks.}
\label{fig:task_visualization}
\end{figure}
\subsection{Simulation Baseline Specification}
For baseline comparisons, we directly adopt the reported results of SAC, PPO, DreamerV3, and TD-MPC2 from the original HumanoidBench benchmark to ensure consistency with the official evaluation protocol. In addition, we implement MAAC-style policy updates on top of the TD-MPC2 architecture as an additional baseline, enabling a more controlled comparison between classical single-step model-gradient optimization and our collaborative chunk-wise modeling framework.

\subsection{Real World Experiment Setup}\label{ap:real}

To evaluate the real-world performance and sim-to-real transferability of our method, we conduct experiments on a bipedal locomotion task using a Unitree Go2 quadruped. We first train the locomotion policy in Isaac Gym and subsequently deploy the learned policy on the real robot. The policy controls all 12 actuated joints through residual joint-position commands, which are tracked by a PD controller. The reward function encourages the robot to maintain an upright bipedal posture while tracking commanded linear and angular velocities, together with regularization terms on joint acceleration, power consumption, action variation, collisions, and hip motion. During simulation training, the robot learns to stand and locomote using its hind legs, and the resulting policy is transferred to the physical Go2 platform for real-world evaluation.
\subsubsection{Reward Design}
Here, $\mathbf{v}^{b}$ and $\boldsymbol{\omega}^{b}$ denote the
base linear and angular velocities expressed in the robot body frame,
respectively; $\mathbf{g}^{\mathrm{proj}}$ is the gravity vector
projected into the body frame; $\mathbf{q}$, $\dot{\mathbf{q}}$, and
$\boldsymbol{\tau}$ denote joint position, velocity, and torque;
and $\mathbf{a}_t$ denotes the policy action.
$\mathcal{C}$ contains the thigh, calf, hip, and base links, while
$\mathcal{H}$ denotes the four hip joints.
We use $\sigma=0.25$ for velocity tracking.
The upright reference configuration is
$q_{\mathrm{hip}}^{\mathrm{ref}}=0$,
$q_{\mathrm{thigh}}^{\mathrm{ref}}=1.8$, and
$q_{\mathrm{calf}}^{\mathrm{ref}}=-1.1$ rad for each leg. The total reward is computed as $r_t = \sum_i w_i r_i$. Here, $\mathbf{1}_{\mathrm{up}} = \mathbf{1}[-g_x^{\mathrm{proj}} > 0.9]$ indicates that the robot is sufficiently upright. The listed weights correspond to the configured reward scales.
\begin{table*}[h]
    \centering
    \caption{\textbf{Reward terms for bipedal Go2 locomotion.}}
    \label{tab:go2_reward}
    \small
    \setlength{\tabcolsep}{5pt}
    \renewcommand{\arraystretch}{1.2}
    \begin{tabularx}{\textwidth}{@{}l X c@{}}
        \toprule
        \textbf{Reward term} & \textbf{Definition} & \textbf{Weight $w_i$} \\
        \midrule

        Linear velocity tracking
        &
        $\displaystyle
        \exp\!\left(
        -\frac{
        (c_x + v_z^{b})^2 +
        (c_y - v_y^{b})^2
        }{\sigma}
        \right)
        \mathbf{1}_{\mathrm{up}}
        $
        & $2.0$
        \\

        Angular velocity tracking
        &
        $\displaystyle
        \exp\!\left(
        -\frac{
        (c_{\mathrm{yaw}}-\omega_x^{b})^2
        }{\sigma}
        \right)
        \mathbf{1}_{\mathrm{up}}
        $
        & $2.0$
        \\

        DOF acceleration
        &
        $\displaystyle
        \sum_j
        \left(
        \frac{\dot q_{t-1,j}-\dot q_{t,j}}
        {\Delta t}
        \right)^2
        $
        & $-2.5\times10^{-7}$
        \\

        Joint power
        &
        $\displaystyle
        \sum_j
        |\dot q_j|\,|\tau_j|
        $
        & $-2.0\times10^{-5}$
        \\

        Base height
        &
        $\displaystyle
        \exp\!\left(
        -\frac{(h-h^\star)^2}{0.25}
        \right),
        \qquad h^\star=0.55\,\mathrm{m}
        $
        & $8.0$
        \\

        Collision
        &
        $\displaystyle
        \sum_{k\in\mathcal{C}}
        \mathbf{1}\!
        \left[
        \|\mathbf f_k\|_2 > 0.1
        \right]
        $
        & $-2.0$
        \\

        Action rate
        &
        $\displaystyle
        \|\mathbf a_t-\mathbf a_{t-1}\|_2^2
        $
        & $-0.01$
        \\

        Action smoothness
        &
        $\displaystyle
        \|\mathbf a_t
        -2\mathbf a_{t-1}
        +\mathbf a_{t-2}\|_2^2
        $
        & $-0.01$
        \\

        Upright orientation
        &
        $\displaystyle
        -g_x^{\mathrm{proj}}
        $
        & $2.0$
        \\

        Upright reference pose
        &
        $\displaystyle
        \exp\!\left(
        -\frac{
        \frac{1}{N_q}
        \|\mathbf q-\mathbf q^{\mathrm{ref}}\|_2^2
        }{0.25}
        \right)
        \mathbf{1}_{\mathrm{up}}
        $
        & $2.0$
        \\

        Hip regularization
        &
        $\displaystyle
        \sum_{j\in\mathcal{H}}
        |a_j|
        $
        & $-0.2$
        \\

        \bottomrule
    \end{tabularx}
\end{table*}

\subsection{Complete Simulation Results}
\label{ap:all_results}
Figure~\ref{fig:all-results} presents the complete results across all 19 selected tasks in the HumanoidBench.

\begin{figure}
    \centering
    \includegraphics[width=\linewidth]{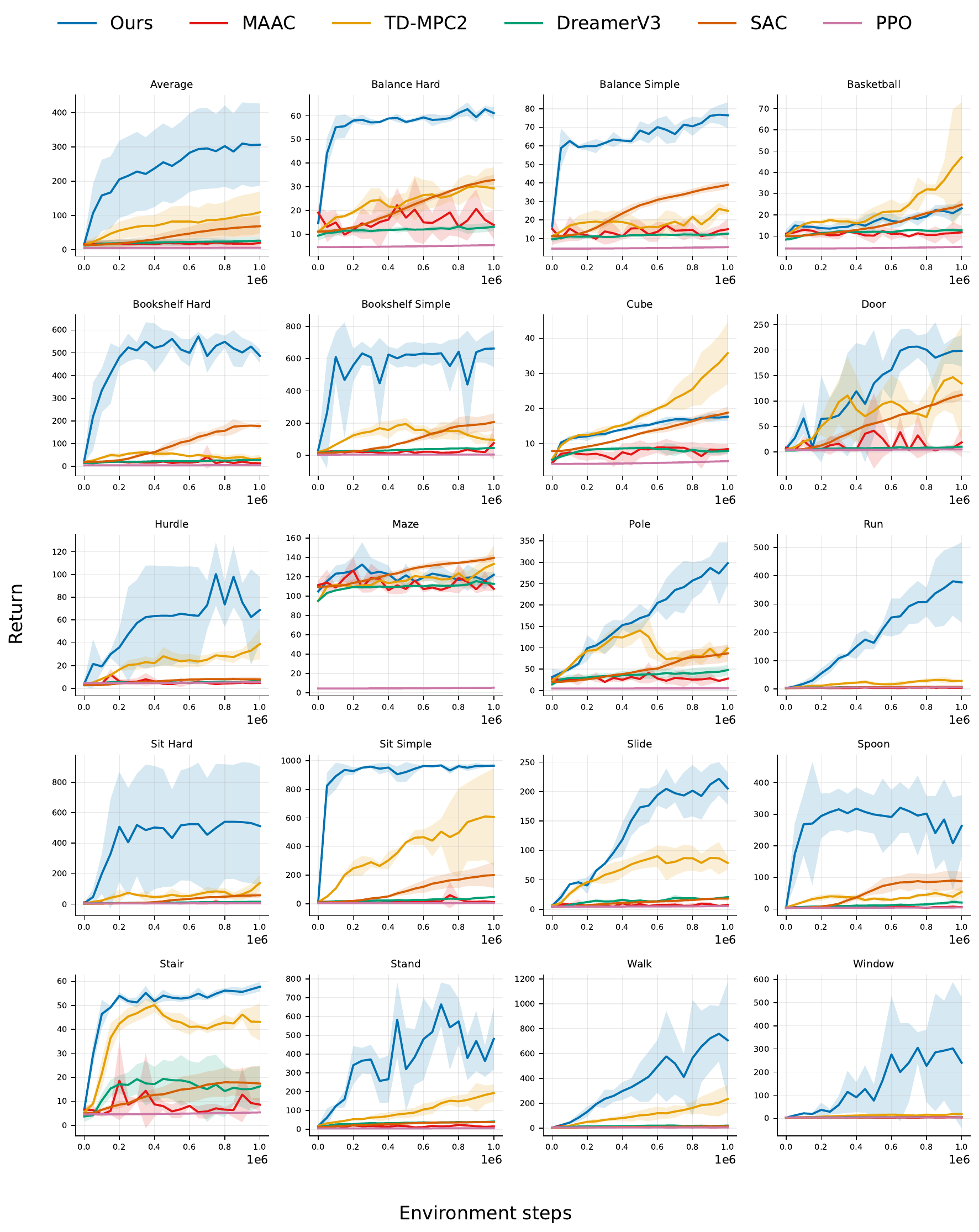}
    \caption{\textbf{All results in Humanoid Benchmark}. The reward curves of 19 tasks in Humanoid Bench. Mean of 3 runs with different seeds. Shaded areas are 95\% confidence intervals.}
    \label{fig:all-results}
\end{figure}

\end{document}